\documentclass[conference]{IEEEtran}
\usepackage[T1]{fontenc}
\usepackage{cite}
\usepackage{graphicx}
\usepackage{amsmath,amssymb,amsthm}
\usepackage{url}
\usepackage{fancyhdr}
\fancypagestyle{arxivnotice}{%
  \fancyhf{}%
  \fancyhead[C]{\small\textit{Accepted at the 2026 IEEE International Conference on Electro/Information Technology (EIT 2026)}}%
}

\title{Optimality of Sequential Filtering Under Independent Cost and Selectivity Models}

\author{
\IEEEauthorblockN{
Hrishikesh Paranjape,
Abhishek Mandal,
Xian Sun
}
\IEEEauthorblockA{
Meta, USA
}
\thanks{Accepted for publication in the 2026 IEEE International Conference on Electro/Information Technology (EIT 2026). \copyright~2026 IEEE. Personal use of this material is permitted. Permission from IEEE must be obtained for all other uses, in any current or future media, including reprinting/republishing this material for advertising or promotional purposes, creating new collective works, for resale or redistribution to servers or lists, or reuse of any copyrighted component of this work in other works.}
}
\begin{document}
\maketitle
\thispagestyle{arxivnotice}

\begin{abstract}
Sequential filtering pipelines are a common design
pattern in large-scale systems, where a large population of items is
progressively reduced by a sequence of stages that each incur cost.
Despite their prevalence in ranking systems, cascaded machine
learning inference, and fraud detection, filter ordering is often
determined by heuristics without formal guarantees. We formalize
sequential filtering under an expected-cost objective and prove
that, under an independence model, ordering filters by increasing
ratio of cost to rejection probability minimizes expected total
cost. Extensive Monte Carlo simulations show that the optimal
ordering strictly dominates common heuristics across all runs,
both in expectation and across the full distribution of outcomes.
\end{abstract}

\begin{IEEEkeywords}
sequential filtering, cascades, expected cost, optimal ordering, scheduling theory
\end{IEEEkeywords}

\section{Introduction}
Large-scale systems frequently process massive candidate sets through a sequence of increasingly expensive tests. Examples include multi-stage ranking pipelines in search and recommendation systems, cascaded inference architectures in machine learning, and detection pipelines in spam and fraud prevention.

A central systems question is how to order these stages to minimize overall cost while preserving desired accuracy or recall. In practice, pipeline stages are often ordered using simple heuristics, such as placing the cheapest filters first or placing the most selective filters first. While intuitive, such heuristics can be systematically suboptimal because they ignore the interaction between cost and selectivity.

Consider a pipeline in which a highly selective but expensive filter is applied early: although it rejects many items, the cost incurred on the full population may dominate total computation. Conversely, placing a cheap but weak filter early may fail to sufficiently reduce the population, amplifying downstream cost. These trade-offs suggest the need for a principled ordering criterion.

In this work, we study the sequential filtering problem under a commonly assumed independence model and derive a simple, globally optimal ordering rule. We further demonstrate empirically that this ordering not only minimizes expected cost but also dominates common heuristics across the full distribution of outcomes.

\subsection{Contributions}
\begin{itemize}
    \item We formalize an expected-cost model for sequential filtering pipelines.
  \item We prove a simple and globally optimal ordering rule using a pairwise exchange argument.
  \item We empirically validate the theory using large-scale Monte Carlo simulations.
  \item We show distributional dominance using empirical CDFs and dominance scatter plots.
\end{itemize}

\section{Problem Definition}
We consider a set of filters $F=\{1,\ldots,n\}$. Each filter $i$ is characterized by a per-item cost $c_i>0$ and a pass probability $p_i\in[0,1]$. We assume filter outcomes are independent, costs are additive, and pass probabilities are stationary.

Given an ordering $\pi=(i_1,\ldots,i_n)$, the expected total cost is
\begin{equation}
\mathbb{E}[C(\pi)] = \sum_{k=1}^{n} c_{i_k}\prod_{j=1}^{k-1} p_{i_j},
\label{eq:expected_cost}
\end{equation}
where the empty product equals $1$. Our objective is to find an ordering $\pi^\star$ that minimizes $\mathbb{E}[C(\pi)]$.

\section{Optimal Ordering}
\newtheorem{theorem}{Theorem}
\begin{theorem}[Optimal Sequential Filtering Order]
Under the assumptions of Section~II, ordering filters by non-decreasing value of
\begin{equation}
\frac{c_i}{1-p_i}
\label{eq:ratio_rule}
\end{equation}
minimizes the expected total cost among all orderings.
\end{theorem}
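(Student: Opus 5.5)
We will use a pairwise (adjacent) exchange argument and then pass from local to global optimality. Fix an ordering $\pi=(i_1,\ldots,i_n)$ and a position $k$. Let $\pi'$ be the ordering obtained by swapping the adjacent filters $a=i_k$ and $b=i_{k+1}$. Write $P=\prod_{j=1}^{k-1}p_{i_j}$ for the pass probability of the common prefix.

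In \eqref{eq:expected_cost}, every term with index below $k$ is unchanged by the swap. Every term with index above $k+1$ is also unchanged, because it contains the product $p_ap_b$, which is symmetric in $a$ and $b$. Only the two middle terms differ, so
\begin{equation*}
\mathbb{E}[C(\pi)]-\mathbb{E}[C(\pi')] = P\bigl(c_a+p_ac_b\bigr)-P\bigl(c_b+p_bc_a\bigr) = P\bigl(c_a(1-p_b)-c_b(1-p_a)\bigr).
\end{equation*}
Since $P\ge 0$, placing $a$ before $b$ is no worse than the reverse exactly when $c_a(1-p_b)\le c_b(1-p_a)$. When $p_a,p_b<1$ this is the inequality $c_a/(1-p_a)\le c_b/(1-p_b)$.

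The main technical obstacle is the degenerate case $p_i=1$, where \eqref{eq:ratio_rule} is undefined. We interpret $c_i/(1-p_i)=+\infty$ in that case, so such filters go last. The cross-multiplied comparison $c_a(1-p_b)\le c_b(1-p_a)$ confirms this: if $p_b=1$, the left side is $0$ and the right side is $c_b(1-p_a)\ge 0$, so $a$ before $b$ is never worse. If both $p_a=p_b=1$, the two sides are equal and the order between them does not matter. We therefore work throughout with the key $r_i=c_i/(1-p_i)\in(0,\infty]$ and with the cross-multiplied comparison, which avoids any division.

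To pass to global optimality, observe that there are finitely many orderings, so a minimizer $\pi^\star$ exists. Let $\sigma$ be any ordering that is non-decreasing in $r$. Starting from $\pi^\star$, perform a bubble sort with respect to $r$: repeatedly swap an adjacent pair $(a,b)$ with $r_a>r_b$. By the computation above, each such swap changes the cost by $-P\bigl(c_a(1-p_b)-c_b(1-p_a)\bigr)\le 0$, so it does not increase the cost. Each swap also strictly reduces the number of $r$-inversions, so the process terminates. It ends at an ordering that is non-decreasing in $r$, with cost at most $\mathbb{E}[C(\pi^\star)]$, and hence that ordering is optimal.

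Any two orderings that are non-decreasing in $r$ differ only by permutations within blocks of tied $r$-values. Such permutations can be realized by adjacent swaps of tied pairs. For a tied pair we have $c_a(1-p_b)=c_b(1-p_a)$, including the case of two infinite keys, so each of these swaps leaves the cost unchanged. Consequently every ordering sorted by \eqref{eq:ratio_rule}, in particular $\sigma$, attains the minimum, which proves the theorem.
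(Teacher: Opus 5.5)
Your proof is correct and uses the same adjacent-exchange argument as the paper: you compare $c_a + p_a c_b$ with $c_b + p_b c_a$ and reduce to $c_a(1-p_b)\le c_b(1-p_a)$. You are more careful than the paper's sketch, since you make the prefix factor $P$ and suffix invariance explicit, handle $p_i=1$, and justify the step from local to global optimality by bubble-sorting from an existing minimizer and showing that tied orderings have equal cost, which the paper compresses into ``repeatedly applying such swaps.''
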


\begin{proof}
Consider two adjacent filters $A$ and $B$ with parameters $(c_A,p_A)$ and $(c_B,p_B)$. Applying $A$ then $B$ incurs expected cost $c_A + p_A c_B$, while applying $B$ then $A$ incurs $c_B + p_B c_A$. The former is no worse if and only if
\begin{equation}
c_A(1-p_B) \le c_B(1-p_A),
\end{equation}
which is equivalent to $\frac{c_A}{1-p_A}\le \frac{c_B}{1-p_B}$. Any ordering violating this condition can be locally improved by swapping adjacent filters. Repeatedly applying such swaps yields a globally optimal ordering.
\end{proof}

\section{Experimental Evaluation}
\subsection{Setup}
We evaluate three strategies: (i) \emph{Cost-order}, which sorts filters by increasing cost; (ii) \emph{Pass-rate-order}, which sorts by increasing pass probability; and (iii) \emph{Optimal}, which sorts by increasing $c/(1-p)$. Each experiment samples $n=50$ filters with costs uniformly drawn from $\{1,\ldots,100\}$ and pass probabilities uniformly drawn from $[0.4,1.0]$. We simulate a population of $10^6$ items and repeat the experiment over $R=500$ independent runs.

\subsection{Aggregate Results}
Across all runs, the optimal ordering achieves the lowest total cost. Cost-based ordering is consistently suboptimal but relatively stable, while pass-rate-based ordering exhibits high variance and occasionally extreme costs. These results highlight the risk of relying solely on selectivity without accounting for cost.

\subsection{Dominance Scatter Plot}
The dominance scatter plot provides a strong per-instance guarantee: the absence of points below the diagonal demonstrates that the optimal ordering never incurs higher cost than the baselines under the experimental model.

\subsection{Empirical CDF Analysis}
The empirical CDF (ECDF) highlights distributional behavior beyond averages. The optimal strategy achieves lower cost for a larger fraction of runs at every cost threshold, while the pass-rate heuristic exhibits a heavy-tailed distribution.

\begin{figure}[t]
\centering
\includegraphics[width=\linewidth]{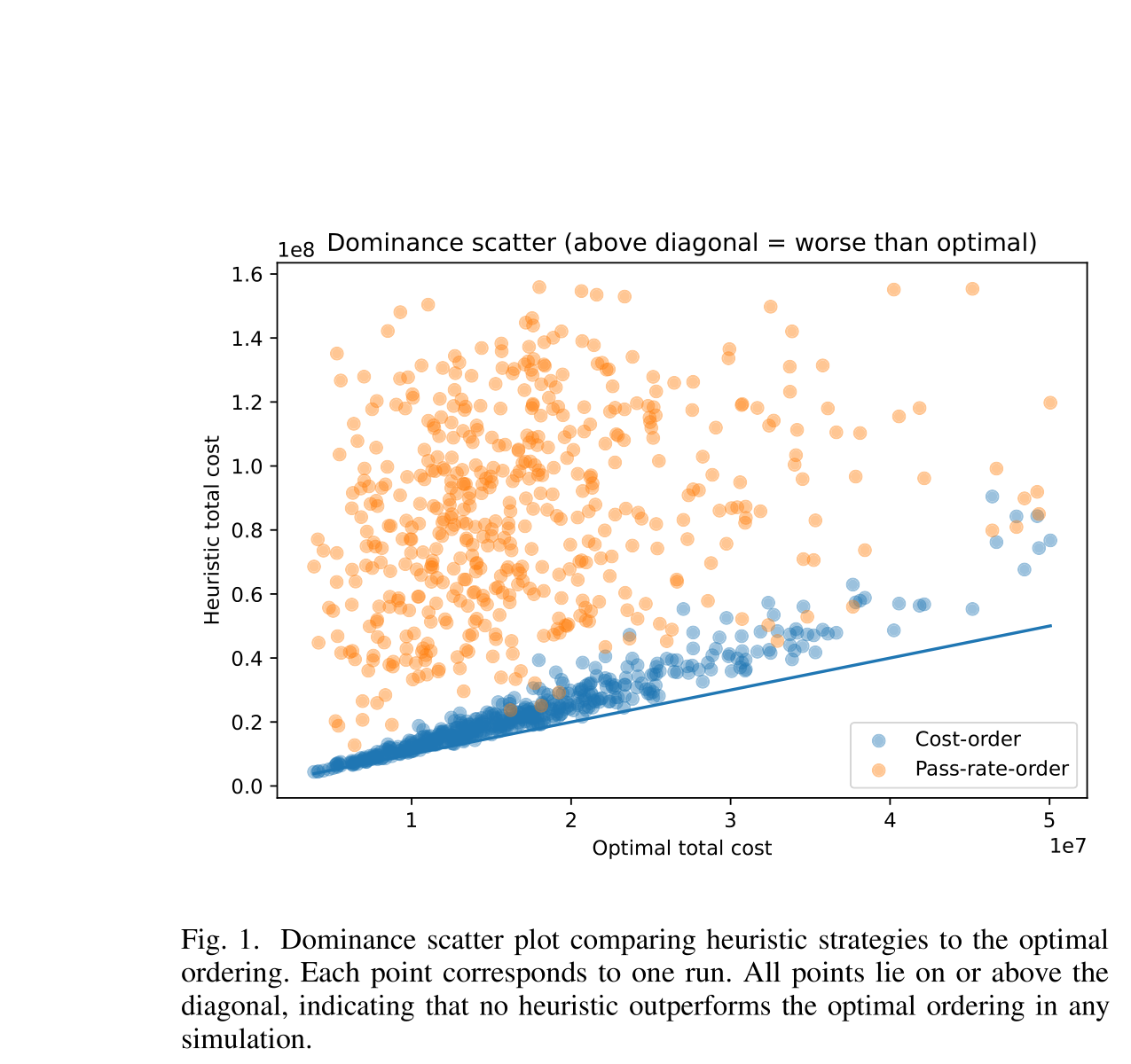}
\caption{Dominance scatter plot comparing heuristic strategies to the optimal ordering. Each point corresponds to one run. All points lie on or above the diagonal, indicating that no heuristic outperforms the optimal ordering in any simulation.}
\label{fig:dominance}
\end{figure}

\begin{figure}[t]
\centering
\includegraphics[width=\linewidth]{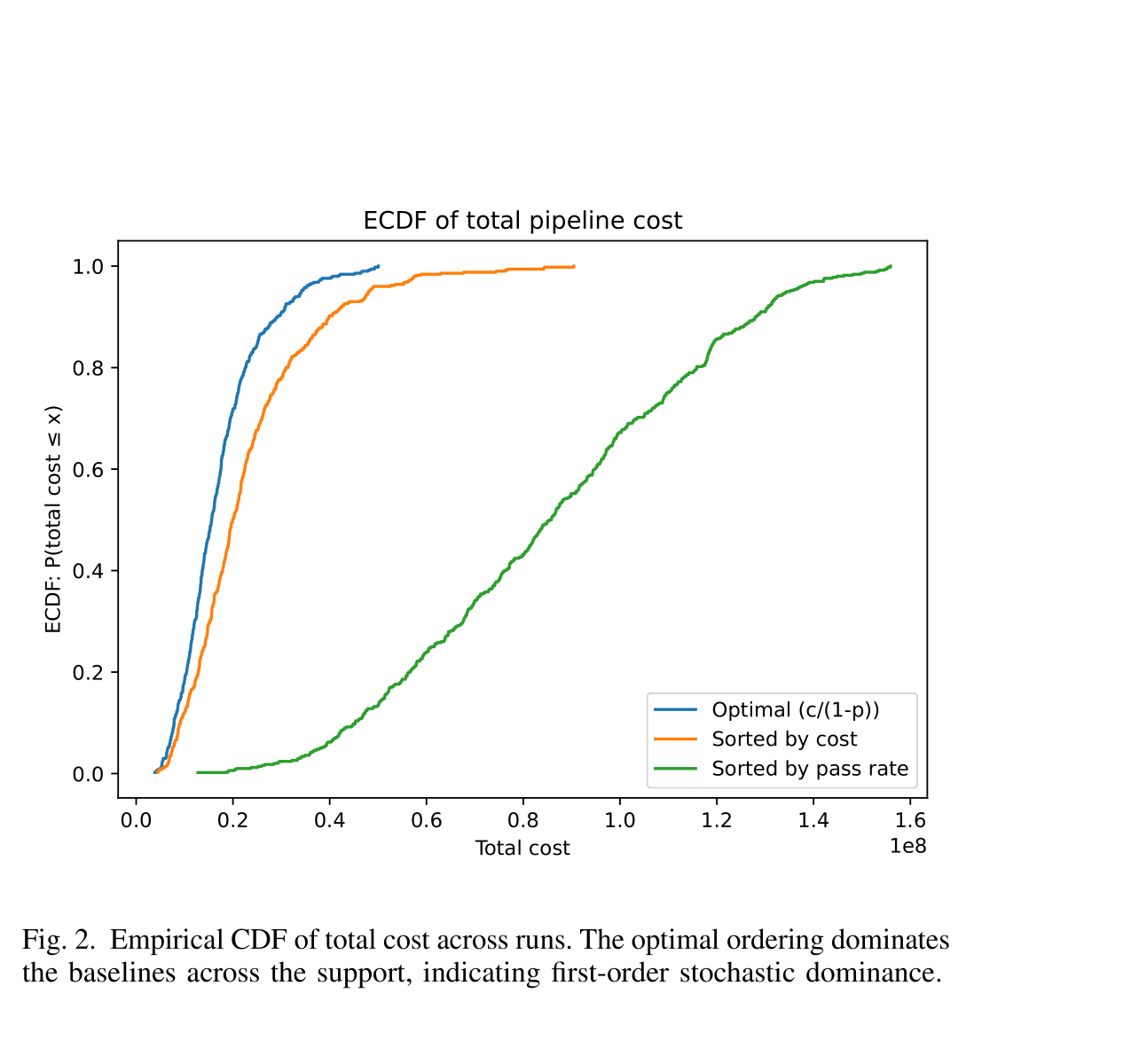}
\caption{Empirical CDF of total cost across runs. The optimal ordering dominates the baselines across the support, indicating first-order stochastic dominance.}
\label{fig:ecdf}
\end{figure}

\section{Related Work}
Our result is closely related to ratio-based scheduling rules, most notably Smith's rule for minimizing weighted completion time on a single machine \cite{smith1956various}. While classical scheduling considers deterministic job completion, our setting models stochastic elimination with additive cost.

Sequential rejection pipelines also appear in cascaded classifiers, such as boosted cascades for object detection \cite{viola2001rapid}, and in modern cascaded inference systems for deep learning with early-exit or cascaded inference architectures \cite{teerapittayanon2016branchynet}. Sequential decision-theoretic foundations for staged testing can be traced back to classical work on sequential analysis \cite{wald1947sequential}.

\section{Discussion and Limitations}
The optimality guarantee relies on independence and stationarity assumptions. In real systems, filters may be correlated, adaptive, or state-dependent. While our empirical results are consistent with the theory under the assumed model, extending the analysis to correlated or adaptive settings is an important direction for future work.

\section{Conclusion}
We presented a formal analysis of sequential filtering pipelines and proved a simple optimal ordering rule under an independence model. Extensive simulations demonstrate that the optimal ordering strictly dominates common heuristics across all runs and across the full distribution of costs. These results provide both theoretical guidance and practical insight for designing efficient large-scale filtering systems.

\bibliographystyle{IEEEtran}
\bibliography{references}

\end{document}